\documentclass[runningheads]{llncs}
\usepackage[T1]{fontenc}
\usepackage{graphicx,verbatim}
\usepackage{amsmath,amssymb}
\usepackage{booktabs}
\usepackage{algorithm}
\usepackage{algorithmic}
\usepackage[hidelinks]{hyperref}

\begin{document}

\title{When Average Calibration Fails:\\
Site-Conditional Federated\\
Conformal Risk Control}
\titlerunning{When Average Calibration Fails: Site-Conditional Federated CRC}

\author{Nafis Fuad Shahid}
\authorrunning{N.F. Shahid}
\institute{Dhaka, Bangladesh \\
    \email{nafisfuadshahid@gmail.com}}

\maketitle

\begin{abstract}
Conformal risk control (CRC) provides distribution-free segmentation guarantees by calibrating a prediction-set threshold on held-out data.
In federated deployments, the standard approach pools calibration scores into a single threshold.
We quantify, on real multi-institutional brain tumor data (FeTS-2022, 1{,}251 subjects, 20~institutions), a critical failure: \emph{naive pooled} CRC protects the average hospital but violates coverage at 40\% of individual institutions, with the worst site exceeding the target false-negative rate by 7.8 percentage points.
We trace this failure to a hidden design choice: the aggregation weights implicitly determine \emph{whose} coverage is protected.
Sample-size weighting optimizes patient-level validity but can sacrifice institution-level reliability; equal-site weighting improves institution-level reliability on this benchmark at comparable efficiency, using only a single scalar per site.
We propose risk-curve shrinkage as a principled mechanism: each site transmits its empirical risk curve ($G$ scalars) and a single hyperparameter $n_0$ smoothly interpolates between site-specific local calibration and sample-size-weighted pooled calibration.
Leave-one-site-out sensitivity analysis identifies $n_0{=}19$, achieving 2.7/20 violations at $2.0\times$ stretch.
Direct Lagrangian budget optimization \emph{fails} by concentrating risk on vulnerable hospitals; the finite-sample correction term is essential: removing it triples violations.
No patient-level images, masks, or per-volume scores leave any site. 
Code implementation is available at: \url{https://github.com/NafisFuadShahid/Fed-CRC-Seg}.

\keywords{Federated Learning \and Conformal Risk Control \and Segmentation \and Uncertainty Quantification}
\end{abstract}

\section{Introduction}
\label{sec:intro}

Deploying segmentation models across hospitals requires calibrated uncertainty: clinicians must know when a model's output can be trusted and when it cannot.
Conformal risk control (CRC)~\cite{angelopoulos2024crc} provides distribution-free, finite-sample guarantees of the form $\mathbb{E}[\ell(C_\lambda(X), Y)] \leq \alpha$, where $\ell$ is a monotone loss and $\lambda$ is calibrated on held-out data.
CRC has been applied to centralized medical segmentation, including morphological dilation families~\cite{mossina2025}, conditional CRC~\cite{luo2025cra}, and semantic CRC~\cite{teneggi2025}, but never in the federated setting where calibration data is distributed across heterogeneous institutions that cannot share patient data.

In federated learning (FL)~\cite{mcmahan2017}, the natural approach pools calibration scores from all sites to compute a single global threshold, inheriting federated conformal prediction machinery~\cite{lu2023fedcp}.
We show this \emph{naive pooled CRC} harbors a critical failure mode: it satisfies the marginal coverage guarantee while violating coverage at 40\% of individual institutions.
On FeTS-2022 brain tumor data~\cite{fets2022} with 20 institutions, the worst site reaches FNR${=}0.178$; such site-specific errors could translate into missed tumor regions in downstream clinical workflows, motivating institution-level evaluation.
This marginal-vs-conditional gap is well studied in CP theory~\cite{vovk2012,barber2021limits}, but its magnitude in federated medical segmentation has not been quantified.

Crucially, we show this failure is not inherent to scalar communication: an \emph{unweighted} scalar-threshold average achieves comparable reliability to our full method.
The root cause is the \emph{aggregation weighting}: sample-size weights optimize patient-level validity but can sacrifice institution-level reliability.
This reveals federated calibration as an \emph{objective-selection} problem, not merely an estimation problem.

\textbf{Contributions.}
(1)~We quantify the marginal-conditional coverage gap for federated CRC on 1{,}251 real multi-institutional brain tumor volumes: 8/20 institutions fail while the average appears calibrated.
(2)~We identify aggregation weighting as a hidden determinant of federated coverage: sample-size weighting targets patient-level validity, equal-site weighting targets institution-level reliability, and these diverge on FeTS-2022.
(3)~We propose risk-curve shrinkage as a controllable mechanism, with $n_0$ interpolating between local and pooled calibration, validated via sensitivity analysis.
(4)~Direct budget optimization fails; the finite-sample correction is essential: removing it triples violations.

\section{Related Work}
\label{sec:related}

\textbf{CRC for segmentation.}
Angelopoulos et al.~\cite{angelopoulos2024crc} introduced CRC for monotone losses.
Recent centralized extensions include morphological dilation families~\cite{mossina2025}, conditional CRC~\cite{luo2025cra}, and semantic CRC~\cite{teneggi2025}.
All assume centralized calibration data.

\textbf{Federated conformal prediction.}
Lu et al.~\cite{lu2023fedcp} prove federated CP coverage under partial exchangeability for classification.
Extensions address group-conditional~\cite{gcfcp2026}, Byzantine-robust~\cite{kang2024}, privacy-preserving~\cite{plassier2023fedcp}, and weighted-quantile~\cite{fedwqcp2026} settings.
To our knowledge, these methods do not address pixel-level segmentation CRC.
The cell \emph{federated $\times$ CRC $\times$ pixel-level segmentation} is empty.
Exact conditional coverage is impossible without structural assumptions~\cite{vovk2012,barber2021limits}; our per-site coverage is empirical, viewed as approximate group-conditional coverage where each ``group'' is a hospital.

\textbf{Federated segmentation with uncertainty.}
FUNAvg~\cite{funavg2024} aggregates MC-dropout uncertainty; FedEvi~\cite{fedevi2024} uses evidential learning.
Neither provides formal coverage guarantees.
FedStein~\cite{fedstein2025} applies James--Stein estimation to federated batch-normalization statistics; we apply shrinkage to empirical risk curves for post-hoc calibration.

\section{Method}
\label{sec:method}

\subsection{Setting}

Consider $K$ sites with $n_k$ calibration volumes $(X_i^k, Y_i^k)$ each.
A pre-trained model $f$ produces predicted probabilities via sigmoid activation.
We define nested prediction sets $C_\lambda(X) = \{v : f(X)_v \geq 1 - \lambda\}$, growing with $\lambda$.
The per-volume false-negative rate $\ell(C_\lambda, Y) = 1 - |C_\lambda \cap Y|/|Y|$ is non-increasing in $\lambda$ and bounded in $[0, 1]$, so the CRC loss bound $B = \sup \ell = 1$~\cite{angelopoulos2024crc}.
Goal: find $\hat\lambda$ such that $\mathbb{E}[\ell(C_{\hat\lambda}(X_\mathrm{test}), Y_\mathrm{test})] \leq \alpha$.

\subsection{Pooled CRC and Its Hidden Objective}

Centralized CRC~\cite{angelopoulos2024crc} sets $\hat\lambda = \inf\{\lambda : \hat{R}(\lambda) + B/(n{+}1) \leq \alpha\}$ where $\hat{R}(\lambda) = \frac{1}{n}\sum_i \ell_i(\lambda)$.
\emph{Naive pooled federated CRC} pools all $N = \sum_k n_k$ scores, implicitly optimizing the \textbf{patient-uniform} objective $R_\mathrm{micro}(\lambda) = \sum_k (n_k/N) R_k(\lambda)$: large hospitals dominate.
Under partial exchangeability~\cite{lu2023fedcp} this controls marginal risk but \emph{not} site-conditional risk $\mathbb{E}[\ell \mid H{=}k]$.
\\
An alternative \textbf{site-uniform} objective treats each institution equally: $R_\mathrm{macro}(\lambda) = \frac{1}{K}\sum_k R_k(\lambda)$.
The gap between these objectives is:
\begin{equation}
R_\mathrm{micro}(\lambda) - R_\mathrm{macro}(\lambda) = \frac{\mathrm{Cov}_k(n_k, R_k(\lambda))}{\bar{n}},
\label{eq:gap}
\end{equation}
which is nonzero whenever hospital size correlates with site risk.
On FeTS-2022, the contrast between weighted and unweighted threshold aggregation in Table~\ref{tab:main} shows that the weights assigned to local operating points materially affect site-level reliability.

\subsection{Calibration Poverty at Small Sites}

Each site can independently set $\hat\lambda_k = \inf\{\lambda : \hat{R}_k(\lambda) + B/(n_k{+}1) \leq \alpha\}$.
This guarantees site-conditional coverage, but for small $n_k$ the finite-sample correction $B/(n_k{+}1)$ is prohibitively large: with $n_k{=}5$ and $B{=}1$ it alone exceeds $\alpha{=}0.10$, making nontrivial calibration impossible and forcing $\hat\lambda_k = \lambda_\mathrm{max}$ with stretch above $80\times$.
We term this \emph{calibration poverty}: small hospitals cannot calibrate locally without enormous prediction sets, motivating information borrowing from the federation.

\subsection{Shrinkage-Based Federated CRC}

We propose an approach that borrows information at the \emph{risk-curve} level (Fig.~\ref{fig:overview}).
Each site transmits its empirical risk curve $\hat{R}_k(\lambda)$ on a grid $\Lambda$ of $G$ points.
The server computes $\hat{R}_\mathrm{global}(\lambda) = \frac{1}{N}\sum_k n_k \hat{R}_k(\lambda)$ and the \textbf{shrinkage risk curve}:
\begin{equation}
\hat{R}_k^\mathrm{shrink}(\lambda) = w_k \cdot \hat{R}_k(\lambda) + (1 - w_k) \cdot \hat{R}_\mathrm{global}(\lambda) + \mathrm{corr}_k,
\label{eq:shrinkage}
\end{equation}
where $w_k = n_k/(n_k + n_0)$ is the shrinkage weight, the standard empirical Bayes form where $n_0$ acts as prior precision~\cite{fedstein2025}, and $\mathrm{corr}_k = w_k \cdot B/(n_k{+}1) + (1{-}w_k) \cdot B/(N{+}1)$ is a heuristic interpolation between per-site and pooled finite-sample corrections, motivated by but not formally derived from CRC theory.
The threshold is $\hat\lambda_k^\mathrm{shrink} = \inf\{\lambda \in \Lambda : \hat{R}_k^\mathrm{shrink}(\lambda) \leq \alpha\}$.
As $n_0{\to}0$ this recovers per-site local CRC; as $n_0{\to}\infty$ it recovers pooled CRC.
The hyperparameter $n_0$ thus provides a dial between local, site-specific calibration and pooled, patient-weighted calibration.

\textbf{Privacy.} Each site transmits $G$ real numbers summarizing aggregate loss statistics.
No individual volumes, masks, or per-volume scores leave the site.
Total bandwidth: ${\sim}0.8$\,KB for $G{=}200$.

\begin{figure}[t]
\centering
\includegraphics[width=\textwidth]{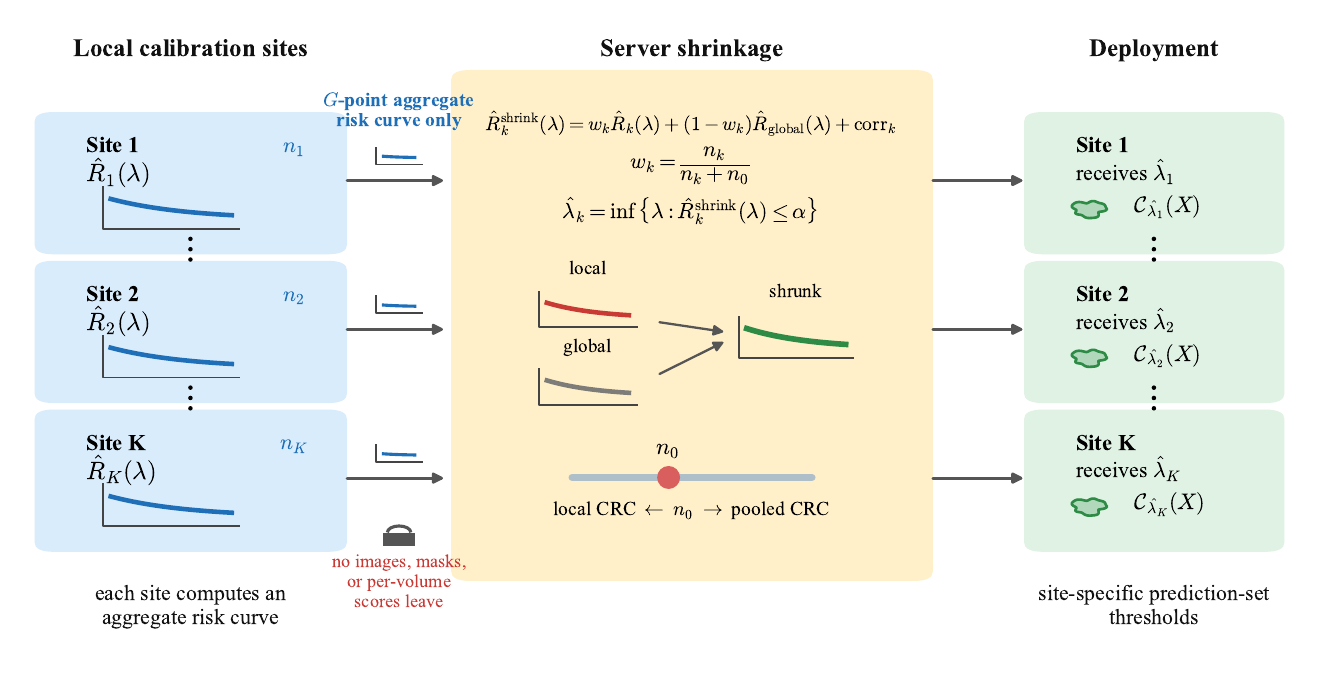}
\caption{Protocol overview. Each site transmits only its empirical risk curve ($G$ scalars, ${\sim}0.8$\,KB for $G{=}200$) to the server, which computes shrinkage-regularized per-site thresholds and broadcasts them back. No patient-level images, masks, or per-volume scores leave any site.}\label{fig:overview}
\end{figure}

\begin{algorithm}[t]
\caption{Shrinkage-Based Federated CRC}\label{alg:fedcrc}
\begin{algorithmic}[1]
\REQUIRE Sites $k \in [K]$; target $\alpha$; grid $\Lambda$; prior strength $n_0$
\STATE Each site $k$: compute $\hat{R}_k(\lambda)$ for $\lambda \in \Lambda$; enforce monotonicity via cumulative minimum over increasing $\lambda$; send to server
\STATE Server: $\hat{R}_\mathrm{global}(\lambda) \leftarrow \frac{1}{N}\sum_k n_k \hat{R}_k(\lambda)$
\FOR{each site $k$}
  \STATE Compute $\hat{R}_k^\mathrm{shrink}$ (Eq.~\eqref{eq:shrinkage}); $\hat\lambda_k \leftarrow \inf\{\lambda : \hat{R}_k^\mathrm{shrink}(\lambda) \leq \alpha\}$
\ENDFOR
\STATE Broadcast $\hat\lambda_k$ to site $k$
\end{algorithmic}
\end{algorithm}

\subsection{Mechanistic Illustration on Real Calibration Curves}
\label{sec:illustration}

Fig.~\ref{fig:riskgeo} shows representative empirical risk upper bounds
$\hat{R}_k(\lambda) + \mathrm{corr}_k$ from one calibration split (seed 42).
The pooled curve crosses $\alpha{=}0.10$ at $\hat\lambda_\mathrm{pool}{\approx}0.95$,
but the hard site (inst.~4, $n_k{=}23$) remains well above $\alpha$ at this
threshold, explaining the coverage violation reported in Table~\ref{tab:main}.
The shrinkage curve for inst.~4 ($n_0{=}19$) interpolates between pooled and
local, crossing $\alpha$ at a practical $\hat\lambda_\mathrm{ours}$.
The easy site (inst.~7, $n_k{=}6$) never crosses $\alpha$ within the plotted
range; its local threshold falls near $\lambda{=}1$ (predicting the entire
volume as tumor), illustrating the extreme stretch incurred by per-site CRC
at small sites.

\begin{figure}[t]
\centering
\includegraphics[width=\textwidth]{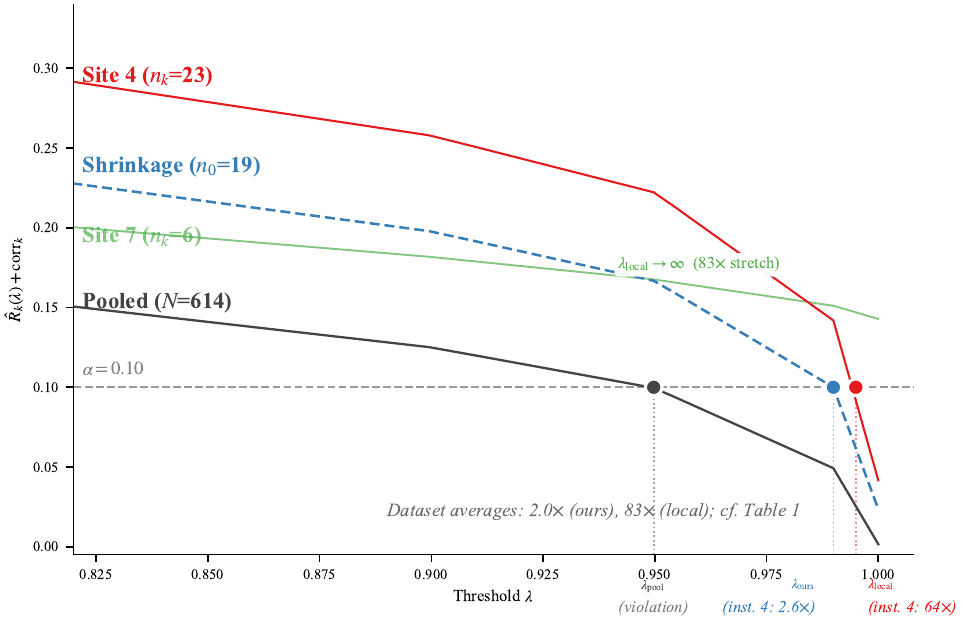}
\caption{Representative empirical risk upper bounds $\hat{R}_k(\lambda) +
\mathrm{corr}_k$ from one calibration split (seed 42), showing two
FeTS-2022 institutions: easy (inst.~7, $n_k{=}6$) and hard (inst.~4,
$n_k{=}23$). At $\hat\lambda_\mathrm{pool}$, the hard site's risk bound
exceeds $\alpha{=}0.10$, the failure mode our method addresses.
Shrinkage ($n_0{=}19$) selects $\hat\lambda_\mathrm{ours}$ between
$\hat\lambda_\mathrm{pool}$ and $\hat\lambda_\mathrm{local}$, achieving
near-nominal coverage without extreme stretch.}\label{fig:riskgeo}
\end{figure}

\subsection{Marginal Safeguard}

\begin{proposition}[Marginal monotonicity safeguard]\label{prop:marginal}
Assume that the pooled threshold $\hat\lambda_\mathrm{pool}$ satisfies
$\mathbb{E}[\ell(C_{\hat\lambda_\mathrm{pool}}(X_\mathrm{test}), Y_\mathrm{test})] \leq \alpha$
under the target test mixture.
For arbitrary site-specific thresholds $\{\hat\lambda_k\}$, define
$\hat\lambda_\mathrm{fed} = \max\bigl\{\hat\lambda_\mathrm{pool},\, \max_k \hat\lambda_k\bigr\}$.
Then
$\mathbb{E}[\ell(C_{\hat\lambda_\mathrm{fed}}(X_\mathrm{test}), Y_\mathrm{test})] \leq \alpha$.
\end{proposition}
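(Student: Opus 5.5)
The argument is a pointwise monotonicity comparison followed by taking expectations. No new probabilistic bound is needed: everything rests on the assumed guarantee for $\hat\lambda_\mathrm{pool}$ and on the nesting of the prediction sets.

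\emph{Step 1: nesting and monotonicity.} By definition $C_\lambda(X)=\{v: f(X)_v \ge 1-\lambda\}$, so $\lambda \le \lambda'$ implies $C_\lambda(X)\subseteq C_{\lambda'}(X)$ for every input $X$. Hence $|C_\lambda\cap Y|$ is non-decreasing in $\lambda$. The per-volume FNR $\ell(C_\lambda,Y)=1-|C_\lambda\cap Y|/|Y|$ is therefore non-increasing in $\lambda$, for every fixed realization $(X,Y)$. This is the monotonicity already noted in the Setting.

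\emph{Step 2: pointwise comparison.} Since $\hat\lambda_\mathrm{fed}=\max\{\hat\lambda_\mathrm{pool},\max_k\hat\lambda_k\}$, we have $\hat\lambda_\mathrm{fed}\ge\hat\lambda_\mathrm{pool}$ deterministically, on every realization of the calibration data. This holds regardless of how the $\hat\lambda_k$ were produced: they may come from shrinkage, local CRC, or be arbitrary data-dependent quantities. Combining with Step 1, for every realization of the calibration data and the test pair,
\[
\ell\bigl(C_{\hat\lambda_\mathrm{fed}}(X_\mathrm{test}),Y_\mathrm{test}\bigr)\le \ell\bigl(C_{\hat\lambda_\mathrm{pool}}(X_\mathrm{test}),Y_\mathrm{test}\bigr).
\]
The point that needs care is that the thresholds are random and may depend jointly on calibration data that is not independent of the test point. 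The pointwise inequality avoids this issue, because Step 1 holds for every fixed $\lambda$ pair and every $(X,Y)$. We simply plug in the realized values $\hat\lambda_\mathrm{fed}(\omega)\ge\hat\lambda_\mathrm{pool}(\omega)$ at each outcome $\omega$.

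\emph{Step 3: take expectations.} Both losses lie in $[0,1]$, so they are integrable. Taking expectations over the joint law of calibration data and test point under the target mixture, monotonicity of expectation gives
\[
\mathbb{E}\bigl[\ell(C_{\hat\lambda_\mathrm{fed}}(X_\mathrm{test}),Y_\mathrm{test})\bigr]\le\mathbb{E}\bigl[\ell(C_{\hat\lambda_\mathrm{pool}}(X_\mathrm{test}),Y_\mathrm{test})\bigr]\le\alpha.
\]
The final inequality is exactly the hypothesis of Proposition~\ref{prop:marginal}.

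I do not expect a real obstacle. The only subtle point is to make sure the expectation in the conclusion is over the same joint randomness as the assumption. This matters if the $\hat\lambda_k$ use extra randomness, such as seeds: in that case we extend the probability space and note that the hypothesis on $\hat\lambda_\mathrm{pool}$ still holds marginally.

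It is also worth remarking that the proof says nothing about site-conditional risk. Raising the threshold can only lower each site's loss as well, so $\hat\lambda_\mathrm{fed}$ inherits any site-level guarantee that $\hat\lambda_k$ has for site $k$, by the same pointwise argument. The price is efficiency (a larger stretch), not validity.
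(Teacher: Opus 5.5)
Your proof is correct and follows the paper's argument: $\hat\lambda_\mathrm{fed}\ge\hat\lambda_\mathrm{pool}$ combined with the monotonicity of $\ell$ in $\lambda$ gives a pointwise loss comparison, and taking expectations yields the bound. Your extra care about data-dependent thresholds and your remark that any site-level guarantee of $\hat\lambda_k$ is also inherited are accurate elaborations that the paper leaves implicit.
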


\begin{proof}
$\hat\lambda_\mathrm{fed} \geq \hat\lambda_\mathrm{pool}$ and $\ell$ non-increasing in $\lambda$ give $\ell(C_{\hat\lambda_\mathrm{fed}}) \leq \ell(C_{\hat\lambda_\mathrm{pool}})$ pointwise, so the expectation bound follows.
\end{proof}

\textbf{Remark.}
This safeguard holds for \emph{any} choice of per-site thresholds, as long as the deployed threshold includes $\hat\lambda_\mathrm{pool}$ in the maximum.
Deploying $\hat\lambda_\mathrm{fed}$ globally achieves 0~violations at $n_0{=}9$ but at $67\times$ stretch (Table~\ref{tab:ablation}); per-site thresholds are therefore used in practice and evaluated empirically.
Exact conditional coverage is impossible without structural assumptions~\cite{vovk2012,barber2021limits}.

\section{Experiments}
\label{sec:experiments}

\subsection{Setup}

\textbf{Data.}
FeTS-2022 training set~\cite{fets2022}: 1{,}251 multi-modal brain MRI volumes from 23~institutions.
After excluding institutions with fewer than six subjects, our analysis retains 20~institutions (calibration sizes: 3--255).
Per-site 50/50 cal/test splits with seeds $\{42, 1337, 2024\}$.

\textbf{Model.}
Pre-trained SegResNet from the MONAI model zoo~\cite{monai}, trained on BraTS-2021.

\textbf{Baselines.}
\textbf{B3 (Naive Pooled):} pools all scores (patient-uniform weighting).
\textbf{B2 (Per-site Local):} independent per-site thresholds.
To test whether the failure is caused by scalar communication or by weighting, we include two scalar-threshold baselines: each site computes its local CRC threshold $\hat\lambda_k$ and the server broadcasts either the \textbf{sample-size-weighted average}~$\hat\lambda = \sum_k (n_k/N)\hat\lambda_k$, following FedWQ-CP~\cite{fedwqcp2026}, or the \textbf{equal-site average}~$\hat\lambda = \frac{1}{K}\sum_k \hat\lambda_k$.

\textbf{CRC details.}
Loss: per-volume pixel-FNR.
Prediction sets: $C_\lambda = \{v : f(X)_v \geq 1{-}\lambda\}$ on $G{=}200$ uniformly spaced $\lambda \in [0,1]$, monotonicity enforced via cumulative minimum over increasing $\lambda$.
\\
Target $\alpha = 0.10$, swept over $\{0.05, 0.10, 0.15, 0.20\}$.

\textbf{Metrics.}
(i)~Violations: sites with mean test FNR $> \alpha$ (mean$\pm$std over seeds).
(ii)~Worst-site FNR (mean over seeds).
(iii)~Stretch: $|C_{\hat\lambda}|/|Y|$ (mean over seeds).

\subsection{Main Results}

Table~\ref{tab:main} presents results across three seeds.
B3 violates coverage at $8.0 \pm 2.4$ of 20~sites, with worst FNR $= 0.178$, nearly double the target.
B2 reduces violations to~$1.3$ but inflates stretch to $83\times$ due to calibration poverty at small sites.

\textbf{The weighting, not the representation, drives the failure.}
Weighted scalar aggregation ($8.3 \pm 2.9$ violations) performs comparably to naive pooling: finite-sample correction pushes many calibration-poor sites to the conservative boundary $\hat\lambda_k{=}\lambda_\mathrm{max}$, and sample-size weighting suppresses their influence.
Unweighted scalar aggregation preserves the influence of these conservative local thresholds, achieving $1.7 \pm 0.9$ violations at $2.3\times$ stretch.
Both scalar baselines require only 4~bytes/site, compared with ${\sim}800$~bytes/site for risk-curve transmission.

Risk-curve shrinkage provides a \emph{controllable} trade-off.
At $n_0{=}9$: 1.3~violations, $29\times$ stretch.
At $n_0{=}15$: 2.7~violations, $4.4\times$ stretch.
LOSO sensitivity analysis (Sec.~\ref{sec:n0}) identifies $n_0{=}19$: 2.7~violations at $2.0\times$ stretch, a 97.6\% reduction in average stretch relative to B2.
The failure is not a small-sample artifact: Fig.~\ref{fig:money} shows that Institution~18 (382~patients, 30\% of the dataset) is miscovered at FNR${=}0.150$ under pooling, and B2's stretch explodes to $100{-}250\times$ at small sites, while ours reduces average stretch to $4.4\times$.

\begin{table}[t]
\caption{Results on FeTS-2022 (20 institutions, $\alpha{=}0.10$, 3 seeds). Violations: mean$\pm$std; worst FNR and stretch: mean over seeds.}\label{tab:main}
\centering\small
\begin{tabular}{lccc}
\toprule
\textbf{Method} & \textbf{Violations} ($\downarrow$) & \textbf{Worst FNR} ($\downarrow$) & \textbf{Stretch} ($\downarrow$) \\
\midrule
B3: Naive Pooled & $8.0 \pm 2.4$ & $0.178$ & $1.5\times$ \\
B2: Per-site Local & $1.3 \pm 1.2$ & $0.111$ & $83.2\times$ \\
\midrule
Weighted $\lambda$ Agg. & $8.3 \pm 2.9$ & $0.186$ & $1.4\times$ \\
Unweighted $\lambda$ Agg. & $1.7 \pm 0.9$ & $0.113$ & $2.3\times$ \\
\midrule
Budget Alloc.\ (uncapped) & $12.3 \pm 0.9$ & $0.351$ & $1.4\times$ \\
Budget Alloc.\ ($\delta{=}0.03$) & $2.7 \pm 1.2$ & $0.142$ & $71.6\times$ \\
\midrule
Ours ($n_0{=}9$) & $1.3 \pm 1.2$ & $0.112$ & $28.8\times$ \\
Ours ($n_0{=}15$) & $2.7 \pm 1.7$ & $0.119$ & $4.4\times$ \\
Ours ($n_0{=}19$, LOSO) & $2.7 \pm 1.7$ & $0.125$ & $\mathbf{2.0\times}$ \\
\bottomrule
\end{tabular}
\end{table}

\begin{figure}[t]
\centering
\includegraphics[width=\textwidth]{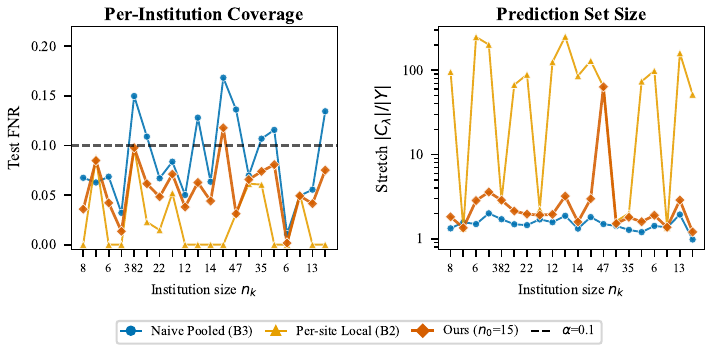}
\caption{Per-institution FNR (left) and prediction-set stretch (right) on FeTS-2022. B3 (blue) violates $\alpha{=}0.10$ at 8/20 sites; B2 (orange) largely restores coverage at $83\times$ stretch. Ours (red, $n_0{=}15$): practical stretch with 2--3 borderline violations. Scalar-aggregation baselines are reported in Table~\ref{tab:main}.}\label{fig:money}
\end{figure}

\subsection{The $n_0$ Dial and LOSO Sensitivity Analysis}
\label{sec:n0}

Fig.~\ref{fig:n0} shows how $n_0$ controls the coverage-efficiency frontier.
Small $n_0$ ($\leq 9$): few violations, high stretch (local regime).
Large $n_0$ ($\geq 30$): low stretch, many violations (pooled regime).
The knee at $n_0 \in [10, 20]$ offers the best trade-off.

As a post-hoc sensitivity analysis, we perform leave-one-site-out evaluation: for each candidate $n_0$, hold out one site, recompute the global curve from the remaining $K{-}1$ sites, compute the shrinkage threshold for the held-out site using its own calibration curve, and evaluate on the held-out site's test partition.
LOSO identifies $n_0{=}19$ as the lowest-stretch operating point with $\leq$3 mean violations.
The failure mode persists across all tested $\alpha \in \{0.05, 0.10, 0.15, 0.20\}$: B3 violates 5--8 sites at each target while ours ($n_0{=}9$) reduces violations to 0--1.

\begin{figure}[t]
\centering
\includegraphics[width=\textwidth]{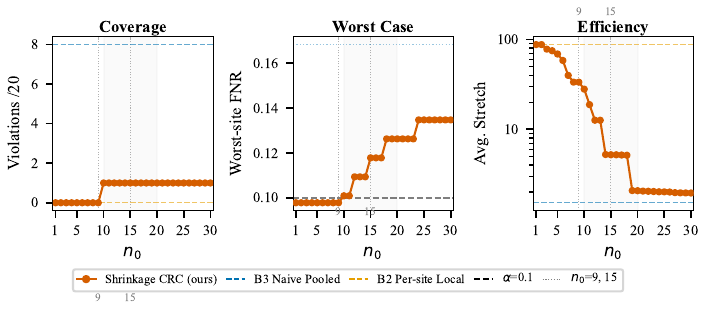}
\caption{Effect of $n_0$ on violations (left), worst-site FNR (center), and stretch (right). Dashed: B3 (blue) and B2 (orange). Shaded: sweet spot $n_0 \in [10, 20]$.}\label{fig:n0}
\end{figure}

\subsection{Why Direct Optimization Fails}

A budget-allocation baseline minimizes total stretch subject to marginal risk $\leq \alpha$: $\min_{\lambda_1,\ldots,\lambda_K} \sum_k S_k(\lambda_k)$ s.t.\ $\sum_k p_k L_k(\lambda_k) \leq \alpha$, where $L_k(\lambda) = \hat{R}_k(\lambda) + B/(n_k{+}1)$, $S_k$ is mean stretch at site $k$, and $p_k = (n_k{+}1)/(N{+}K)$.
We binary-search on the Lagrange multiplier $\mu$ until the constraint binds.

Uncapped, this achieves $1.4\times$ stretch but fails 12.3/20~sites (worst FNR${=}0.351$): the optimizer concentrates risk on small, hard institutions.
Adding a per-site cap ($\delta{=}0.03$) reduces violations to~2.7 but inflates stretch to $71.6\times$, worse than shrinkage on \emph{both} dimensions.

\subsection{Ablations}
\label{sec:ablations}

Table~\ref{tab:ablation} probes three axes.

\textbf{The correction $\mathrm{corr}_k$ is essential}, making this the most consequential ablation.
Removing $\mathrm{corr}_k$: violations jump from 1.3--2.7 to 8.0--9.3 across all $n_0$, comparable to naive pooling.
$\mathrm{corr}_k$ interpolates between local and pooled corrections; its formal analysis is future work, but its empirical necessity is unambiguous.

\textbf{Conservative $\hat\lambda_\mathrm{fed}$.}
The global threshold of Proposition~\ref{prop:marginal} achieves 0~violations at $n_0{=}9$ but $67\times$ stretch, confirming per-site deployment is necessary for clinical utility.

\textbf{Grid $G$.}
$G{=}200$ is stable; $G{=}50$ inflates stretch to $35.7\times$ (too coarse); $G{=}500$ yields $3.0\times$ with similar violations.

\begin{table}[t]
\caption{Ablation results ($\alpha{=}0.10$, mean over 3 seeds).}\label{tab:ablation}
\centering\small
\begin{tabular}{lcccc}
\toprule
\textbf{Ablation} & $n_0$ & \textbf{Viol.} & \textbf{W-FNR} & \textbf{Stretch} \\
\midrule
No $\mathrm{corr}_k$ & 9 & $9.3 \pm 2.6$ & $0.181$ & $1.5\times$ \\
No $\mathrm{corr}_k$ & 15 & $8.3 \pm 2.6$ & $0.173$ & $1.5\times$ \\
\midrule
$\hat\lambda_\mathrm{fed}$ (Prop.~\ref{prop:marginal}) & 9 & $0.0 \pm 0.0$ & $0.050$ & $67.3\times$ \\
$\hat\lambda_\mathrm{fed}$ (Prop.~\ref{prop:marginal}) & 15 & $0.3 \pm 0.5$ & $0.071$ & $46.7\times$ \\
\midrule
Grid $G{=}50$ & 15 & $2.3 \pm 1.9$ & $0.116$ & $35.7\times$ \\
Grid $G{=}500$ & 15 & $3.3 \pm 1.9$ & $0.122$ & $3.0\times$ \\
\bottomrule
\end{tabular}
\end{table}

\section{Discussion}
\label{sec:discussion}

\textbf{Weighting, not communication, drives the failure.}
The contrast between weighted and unweighted scalar aggregation (Table~\ref{tab:main}) reveals that the aggregation weights silently determine \emph{whose} coverage is protected.
The mechanism on FeTS-2022 is calibration poverty: finite-sample correction pushes many small sites to the conservative boundary, and sample-size weighting suppresses their influence while equal-site weighting preserves it.
A $2\times$ stretch means the prediction set contains roughly twice the ground-truth tumor volume, far below the $83\times$ inflation of local CRC and plausibly reviewable in clinical workflows.
On FeTS-2022 the scalar unweighted average is competitive; risk-curve transmission becomes advantageous when sites need different points on the coverage-efficiency frontier, or when downstream losses require the full risk curve rather than a single threshold.
Designing federated calibration methods that explicitly separate target weights (whose risk is controlled) from borrowing weights (whose information is used) is an important direction for future work.

\textbf{Clinical significance.}
Our results show naive federated calibration exposes specific hospitals to site-specific false-negative errors, including one of our largest institutions, while appearing calibrated on average.
Vulnerability arises from case-mix and outcome heterogeneity, not merely from small sample size.

\textbf{Limitations.}
Proposition~\ref{prop:marginal} is conditional on pooled-threshold validity; a finite-sample CRC correction under partial exchangeability remains open.
Per-site thresholds are empirically validated but lack formal guarantees.
We use a single pre-trained model on one dataset; validation with additional anatomies, backbones, and models trained federatively is needed.
Adapting group-conditional FCP~\cite{gcfcp2026} to pixel-level CRC remains future work.

\section{Conclusion}
\label{sec:conclusion}

We quantified a failure mode of naive pooled federated calibration: on real multi-institutional brain tumor data, 40\% of hospitals exceed the target false-negative rate while the average looks fine.
This failure is driven by aggregation weights, not scalar communication: equal-site weighting improves reliability at comparable efficiency.
Risk-curve shrinkage provides a controllable family of operating points between local and pooled calibration, with $n_0$ validated via sensitivity analysis.
Direct budget optimization fails by exploiting vulnerable hospitals; the finite-sample correction is essential: removing it triples violations.
We encourage the community to evaluate calibration guarantees \emph{per site}, not just on average.

\bibliographystyle{splncs04}
\bibliography{references}

\end{document}